\newcommand{\avg}{\operatornamewithlimits{avg}}
\newtheorem{theorem}{Theorem}
\newtheorem{definition}[theorem]{Definition}
\newtheorem{proposition}[theorem]{Proposition}
\newcommand{\pnote}[1]{}
\newcommand{\bgc}[1][]{
	\ifthenelse{\equal{#1}{}}{G}{G^{#1}}
}
\newlength{\wordlength}
\renewcommand{\P}{\ensuremath{\mathsf PTIME}}
\newtheorem{example}[theorem]{Example}
\newenvironment{proof}[1][Proof Sketch]{\begin{trivlist}%
	\item[\hskip \labelsep {\it #1:}]}{%
\end{trivlist}}%
\newenvironment{psketch}{\begin{proof}[Proof sketch]}{%
\end{proof}}
\renewcommand{\P}{\ensuremath{\mathsf P}}
\begin{document}



\title{Computing rational decisions \\ in extensive games with limited foresight} 

\author{Paolo Turrini}

\affil{\small{Department of Computing, Imperial College London}}

\maketitle


\begin{abstract}

We introduce a class of extensive form games where players might not be able to foresee the possible consequences of their decisions and form a model of their opponents which they exploit to achieve a more profitable outcome. We improve upon existing models of games with limited foresight, endowing players with the ability of higher-order reasoning and proposing a novel solution concept to address intuitions coming from real game play. We analyse the resulting equilibria, devising an effective procedure to compute them.

\end{abstract}

\section{Introduction}

\noindent While game theory is a predominant paradigm in Artificial Intelligence, the tools it provides to analyse real game play still abstract away from many essential features. One of them is the fact that in a wide range of extensive games of perfect information (e.g., Chess), humans (and supercomputers) are generally not able to fully assess the consequence of their own decisions and need to resort to a judgment call before making a move. As acclaimed game theorist Ariel Rubinstein puts it, "modeling games with limited foresight remains a great challenge" and the game-theoretic frameworks developed thus far "fall short of capturing the spirit of limited-foresight reasoning" \cite[p.134]{Rubinstein-Bounded}.

On the contrary, the AI approach to game-playing builds upon the assumption that complex extensive games like Chess or Go are {\em theoretically} games of perfect information, but this is only marginally relevant for practical purposes, and the backwards induction solution is of little help in predicting how such games are actually played in practice - a point also raised in Joseph Halpern's AAMAS 2011 invited talk "Beyond Nash Equilibrium: Solution Concepts for the 21st Century" \cite{halpern}.  Decisions are instead taken using heuristic search (e.g., monte-carlo tree search) under various constraints, such as time or memory \cite{RussellW91} \cite{Russel-Norvig}.

{\bf The problem}. Search methods are a framework to handle limited foresight and are widely used for decision-making in real game-play, but a game-theoretic analysis of their equilibrium behaviour is still missing. In particular, we lack the tools to analyse what will happen in complex extensive games of perfect information where players are not able to resort to backwards induction reasoning but to possibly faulty and incomplete heuristic. What is more, the enormous effort to construct players with "opponent modelling" in the AI community (e.g., \cite{schadd2007opponent} \cite{Donkers}) still lacks solid game-theoretic foundations.

{\bf Our contribution}. We introduce games in which players might not be able to foresee the consequence of their strategic decisions all the way up to the terminal nodes and evaluate intermediate nodes according to a concrete heuristic search method. On top of that they can reason about other players' limited foresight and evaluation criteria: they are endowed with higher-order beliefs about what their opponents can perceive of the game and how they evaluate it, beliefs about what their opponents believe the others can see and how the evaluate it, and so forth.
To analyse these games, we propose a new solution concept which combines higher-order reasoning about players' limited foresight and evaluation criteria. {\em The guiding principle for players' behaviour is that each of them chooses a strategy in the game she sees that is a best response to the belief about what the other players can see and how they evaluate it}. We show constructively (Algorithms 1-4) that this solution concept always exists (Theorem 1) and is a strict generalization of other known ones, e.g., backwards induction. As we will observe, the unbounded chain of beliefs underlying our rationality constraints  can be finitely represented and - rather surprisingly - effectively resolved (Proposition \ref{prop:complexity}).

{\bf Related literature}. In  recent years an innovative tradition has emerged in game theory, aiming at capturing situations in which players are unaware of parts of the game they are playing and might even think to be playing a different game from the real one.  Halpern and  R\^{e}go \cite{HalpernR06}, for instance, study models of unawareness of elements of the game played (e.g., other players). Yossi Feinberg \cite{Feinberg} approaches similar problems from a syntactic perspective. Simultaneously, the interplay between belief and awareness in interactive situations is analysed in a series of papers by Heifetz, Meier and Schipper \cite{HeifetzMS06}, \cite{HeifetzMS13}, \cite{HeifetzMS13a}.  

It should be noted that even though all these frameworks abstractly allow to talk about unawareness of {\em some} terminal histories in a game, none of them comes equipped with a solution concept capturing limited foresight reasoning.

A framework that comes closest, perhaps, to this is {\em Games with Short Sight} \cite{GrossiT12:Sight}, a well-behaved collection of games with awareness, in which players of an extensive game make choices without knowing the consequences of their actions and base their decisions on a (possibly incorrect) evaluation of intermediate game positions.

 Games with Short Sight (GSSs) have been studied in relation with a solution concept
called {\em sight-compatible backwards induction}: as players might not be able
to calculate all possible moves up to the terminal nodes, they play rationally
 in a {\em local} sense, executing moves that are backwards induction moves in their
own sight, therefore {\em safely} assuming their opponents see as much of the game as they do.

However, sight-compatible backwards induction precludes any sort of {\em opponent modelling}, as players are not allowed to have a non-trivial belief about what their opponents perceive.
Thus, the tools developed in  \cite{GrossiT12:Sight} to analyse GSSs only allow players to play approximately or inaccurately, they don't allow players to exploit their opponents' believed weaknesses.
Besides, GSSs employ heuristics which are not grounded in practical game-play. Essentially, players come equipped with a preference relation over all histories of the game.

We will avoid strong rationality requirements of this kind, by introducing a significantly higher level of complexity in players' reasoning - notably their ability of forming an "opponent model" -  which, it turns out, still remains computationally manageable. Also players' preference relations will not be taken as given, but derived from concrete search methods. 

An important research line in AI that has similarities with our approach is interactive POMDPs \cite{IPOMDPs1}, which is able to incorporate higher-order epistemic notions in multi-agent decision making, with focus on learning and value/policy iteration. These graph-like models are generally highly complex - in fact the whole approach is known to suffer from severe complexity problems when it comes to equilibrium analysis and approximation methods have been devised to (partially) address them \cite{IPOMDPs2},\cite{IPOMDPs3}. 
Instead, we present a full-blown game-theoretic model of limited foresight that allows for higher-order epistemic notions and yet keeps equilibrium computation within polynomial time.

{{\bf Paper Structure}.} Section "Games with limited foresight" recalls useful formal notation and definitions from the literature upon which we build and introduces the mathematical structures we will be working on, Monte-Carlo Tree Games. Section "Rational beliefs and limited foresight" studies the higher-order extension thereof, Epistemic Monte-Carlo Tree Games. Specifically, we go on and define a new solution concept which takes this higher-order dimension into account and we then show the existence of the new equilibria through an efficient (\P-TIME) algorithm. Section "Conclusion and potential developments" summarises our findings and hints at new research avenues opening up in our framework.

\section{Games with limited foresight}\label{sec:Preliminaries}

We start out with the definition of extensive games, on top of which we build the models of limited foresight. 

\paragraph{Extensive Games}

An {\bf extensive game form}  \cite{Osborne1994} is a tuple  $$(N,H,t,{\rm\Sigma}_{i},o)$$ where [1] $N$ is a finite non-empty set of {\bf players}. [2] $H$ is a non-empty prefix-closed set of sequences, called {\bf histories}, drawn from a set $A$ of {\bf actions}. 
A history ($a^k)_{k=1,\ldots,K} \in H$ is called {\bf terminal} history if it is infinite or if there is no $a^{K+1}$ such that $(a^k)_{k=1,...,K+1}\in H$. The set of terminal histories is denoted $Z$. A history $h$ is instead called {\bf quasi-terminal} if for each $a\in A$, if $(h,a)\in H$, then $(h,a)$ is terminal. If $h\in H$ is a prefix (resp., strict prefix) of $h'\in H$ we write $h \unlhd h'$ (resp., $h \lhd h'$). With $A_h=\{a \in A \mid (h,a)\in H\}$ we denote the set of actions following the history $h$. 
The restriction of $H'\subseteq H$ to $h\in H$, i.e., $\{(h,h^{\prime})\in H \mid (h,h') \in H'\}$ is denoted $H'|_h$. [3]$t: H\setminus Z \rightarrow N$ is a {\bf turn function}, which assigns a player to each non-terminal history, i.e., the player who {\em moves} at that history.  [4] ${\rm\Sigma}_i$ is a non-empty set of {\bf strategies}. A strategy of player $i$ is function $\sigma_i: \{h \in {H\backslash Z} \mid t(h)=i\} \rightarrow A_h$, which assigns an action in $A_h$ to each non-terminal history for which $t(h)=i$. [5] ${o}$ is the {\bf outcome function}. For each strategy profile ${\rm\sigma}=\prod_{i\in N}(\sigma_i)$, the outcome ${o}({\rm\sigma})$ of $\rm\Sigma$ is the terminal history that results when each player $i$ follows the precepts of $\sigma_i$. 

An {\bf extensive game} is a tuple $\mathcal{E}=(\mathcal{G},\{u_i\}_{i\in N})$, where $\mathcal{G}$ is an extensive game form, and $u_i: Z \to \mathbb{R}$ is a utility function for each player $i$, mapping terminal histories to reals. We denote $\succeq_i \subseteq Z\times Z$ the induced total preorder over $Z$ and ${\bf BI}(\mathcal{E})$ the set of backwards induction histories of extensive game $\mathcal{E}$, computed with the standard procedure \cite[Proposition 99.2]{Osborne1994}.

\paragraph{Sight Functions and Forked Extensions}

On top of the extensive game structure, each player moving at the certain point in the game is endowed with a set of histories that he or she can see from then on.

Consider an extensive game $\mathcal{E}=(\mathcal{G},\{u_i\}_{i\in N})$. A {\bf (short) sight function} for $\mathcal{E}$  \cite{GrossiT12:Sight} is a function $$s:H\backslash Z \rightarrow 2^H\backslash\emptyset$$ associating to each non-terminal history $h$ a finite non-empty and prefix-closed subset of all the histories extending $h$, i.e., histories of the form $(h,h')$. We denote $H\lceil_h = s(h)$ the {\bf sight restriction} on $H$ induced by $s$ at $h$, i.e., the set of histories in player $\mathit{t}(h)$'s sight, and $Z \lceil_h$ their terminal ones. Intuitively, the sight function associates any choice point with those histories that the player playing at that choice point {\em actively} explores.

 In \cite{GrossiT12:Sight} the problem of evaluating intermediate positions is resolved by assuming the existence of an arbitrary preference relation over these nodes, which is common knowledge among the players. What we do instead is to introduce an extension of sight functions that models the evaluation obtained by a concrete search procedure. 
The idea is that in order to evaluate intermediate positions, each player carries out a selection and a random exploration of their continuations, all the way up to the terminal nodes. The information obtained is used as an estimate of the value of those positions. This is an encoding of a basic Monte-Carlo Tree Search \cite{Browne2012}.

Let $(\mathcal{E},s)$ be a tuple made by an extensive game $\mathcal{E}$ and a sight function $s$. Sight function $s^{*}$ is called a {\bf forked extension} of sight function $s$ if the following holds:

\begin{itemize}

\item $s(h) \subseteq s^{*}(h)$ i.e., the forked extension prolongs histories in the sight it extends;

\item For $\lceil^{*}$ being the sight restriction calculated using $s^{*}$ as sight function, we have that: if $h \in s^{*}(h) \setminus Z$ then there exists $h' \in s^{*}(h)$ such that $h \lhd h'$, i.e., $s^{*}(h)$ is made of histories that go all the way up to the terminal nodes. \footnote{A further natural constraint on forked sight functions is that of monotonicity, i.e., players do not forget what they have calculated in the past. Formally $s$ is {\em monotonic} if, for each $h,h'$ such that $\mathit{t}(h)=\mathit{t}({h'})$ and $h\lhd h'$, we have that $s^{*}(h)|_{h'} \subseteq s^{*}(h')$. Albeit natural, this assumption is not needed to prove our results. }

\end{itemize}


 A {\bf Monte-Carlo Tree Game} (MTG) is a tuple $S=(\mathcal{E}, s,s^*)$ where $\mathcal{E}=(\mathcal{G},\{u_i\}_{i \in N})$ is an extensive game, $s$ a sight function for $\mathcal{E}$ and $s^{*}$ a forked extension of $s$. We denote $S\lceil_h=(\mathcal{G}\lceil{_h}, \{u_i\lceil{_h}\}_{i\in N})$ the sight restriction of $S$ induced by $s$ at $h$, where $\mathcal{G}\lceil{_h}$ is the game form $\mathcal{G}$ restricted to $H\lceil_h$ and the utility function $u\lceil{_h}: N \times Z \lceil{_h} \to \mathbb{R}$ is constructed as follows. For each $i\in N, g \in Z \lceil{_h},$ we have: $$u_i\lceil{_h}(g) = \avg_{z \in Z\lceil^{*}_h, g \lhd z } u_i(z)$$

So the utility function at terminal histories in a sight is computed by taking the average\footnote{Averaging has the sole purpose of simplifying notation and analysis, which carries over to {\em any} aggregator, with or without lotteries. Besides, it comes along with a few desirable properties, notably the fact that forked extensions {\em never miss dominated continuations}, i.e., moves that ensure a gain no matter what the opponents do. For quantified restrictions on aggregators cfr. for instance \cite{johan-towards}.} of the histories contained in its forked extension. Notice the following important point: histories in the forked extension are truly treated as "random" explorations, with no rationality assumptions whatsoever, in order to construct a preference relation over $Z \lceil{_h}$. Sight-restriction is applied to players, turn function, strategies and outcome function in the obvious way. Summing up, each structure  $(\mathcal{G}\lceil{_h}, \{u_i\lceil{_h}\}_{i\in N})$ is an extensive game, intuitively the part of the game that the player moving at $h$ is able to see, where the terminal histories are evaluated with a monte-carlo heuristic.

The solution concept proposed in \cite{GrossiT12:Sight} to analyse GSSs is \emph{sight-compatible backwards induction}: a choice of strategy, one per player, that is consistent with the subgame perfect equilibrium of each sight-restricted game. We can encode it as follows.

\begin{definition}{\rm (Sight-compatible BI) Let $S$ be a MTG. A strategy profile $\sigma$ is a \emph{sight-compatible backwards induction} if at each $h\in H$, there exists a terminal history $z\in Z\lceil_{h}$ such that $h\sigma(h)\unlhd z$ and $z\in \textbf{BI}(S\lceil_{h})$. The set of sight-compatible backwards induction outcomes of $S$ is denoted $\textbf{SCBI}(S)\subseteq Z$.}
\end{definition}

Thus, a sight compatible backwards induction is a strategy profile $\sigma$  that, at each history $h$, recommends an action $a$ that is among the actions initiating a backwards induction history within the sight of the player moving at $h$. This, notice, is different from the backwards induction solution of the whole game, because players evaluation of intermediate nodes might not be a {\em correct assessment} of the real outcomes of the game. Grossi and Turrini show that the SCBI solution always exists, even in infinite games.

Despite their effort in modelling more {\em procedural} aspects of game play, though, GSSs still lack non-trivial opponent modelling, i.e., players allowing for their opponents to ``miss'' future game developments and evaluate game positions differently (or any higher-order iteration of this belief), while adjusting their behaviour accordingly.

The rest of the paper is devoted to extending MTGs with more realistic but highly more complex reasoning patterns, generalising both GSSs and SCBI. This, it turns out, does not prevent us from having  appropriate well-behaved solution concepts which generalise classical ones, such as backwards induction.


%
%
%
%
%

\section{Rational beliefs and limited foresight}\label{sec:OM}
%
%
%
%
%
%
%
%

We now introduce an extension of MTGs, where players are allowed for the possibility of higher-order opponent-modelling, i.e., to have an explicit belief about what other players can see and how they evaluate it, a belief about what other players believe other players can see and how they evaluate and so forth, {\em compatibly with players' sight}. We study a solution concept for these games and relate it to known ones from the literature. 

\subsection{Players' sights and belief chains}

Let us introduce the idea behind higher-order opponent modelling in MTGs using an example. We will then move on to define the notions formally.

\begin{example}[An intuitive solution]\label{exa:sight-compatible-beliefs} Consider the game shown in Figure 1. Three players, $Ann$, $Bob$ and $Charles$, move at histories marked $A$, $B$ and $C$, respectively. The circle surrounding history $A$ indicates what $Ann$ believes she can see from history $A$, which we write $b(A)$. This, intuitively, coincides what $Ann$ can {\em actually} see, i.e., it equals $s(A)$, $Ann$'s sight at history $A$. What should $Ann$ do in this situation? This  depends on what $Ann$ believes will happen next. If  $Ann$ knew this,  her choice would only be a maximization problem: finding the action that, given what will happen in the future,  gets her the maximal outcome, according to her evaluation  from $A$ - which we write $\succsim^{A}_{Ann}$. 
To find out what Charles will do, Ann considers her belief about what Charles can see from $C$, which we indicate with $b(A)b(C)$. Note this may have nothing to do with what Charles actually sees from C, i.e., b(C). In Figure 1, for instance, Ann believes that Charles can only see $d$ from $C$. The question of what Charles will do is then easily answered, even without considering his preference relation $\succsim^{AC}_{Ann}$, what Ann believes Charles wants from history $C$. Charles, according to Ann, will certainly go to $d$. The next question is: what will Bob do? This, again, will depend on $b(A)b(B)$, the portion of Ann's sight that Ann believes Bob can see from $B$ and on $\succsim^{AB}_{Ann}$, the preferences Ann believes Bob has at $B$. But, at least according to Ann, Bob can also see that Charles can make moves. So, for Bob to decide what to do, he must  first  find out what Charles will do - b(A)b(B)b(C) - according to $\succsim^{ABC}_{Ann}$. This is also an easy task, since $e$ is the only option. The choice at b(A)b(B)b(C) is then determined, but so is then the choice at $b(A)b(B)$. Now all that is left for Ann to do is to solve her maximization problem, determining the choice at $b(A)$.

\end{example}

\begin{figure}
  \begin{center}
  \includegraphics[width=230pt]{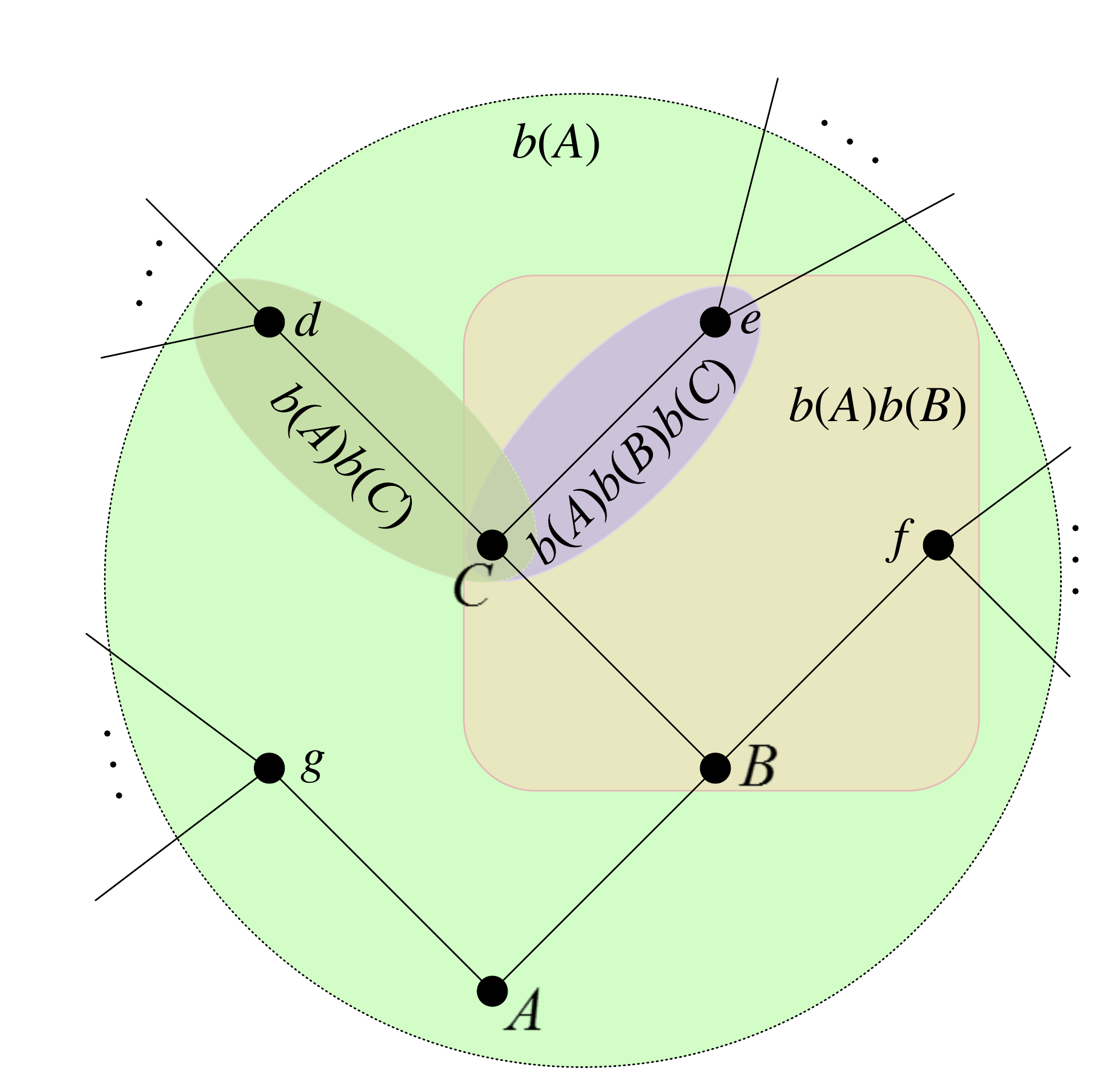}
\label{fig:belief-structure}
  {\caption{A belief structure (modulo evaluations)}}
  \end{center}
\end{figure}

Now we  concentrate on turning the intuitions in the example into formal definitions.
To do so, we introduce the notion of {\em history-sequence}. A history-sequence is a formal device that allows to represent higher-order beliefs about other opponents, consistently with a players' sight.

\begin{definition}[History-Sequences] Consider a MTG $S = (((N,H,t,{\rm\Sigma}_i,o),\{u_i\}_{i\in N}),s,s^{*})$. A \emph{history-sequence} $\textbf{q}$ of $S$ is a sequence of histories of the form $(h_0,h_1,h_2,\cdots,h_k)$ such that



\begin{itemize}

\item $h_j\in H\lceil_{h_0} $ for every $j\in \{1,2,\cdots k\}$, i.e., histories following $h_0$ in the sequence are  histories within the sight of the player moving at $h_0$; 

\item $h_j\lhd h_{j+1}$ for each $j$ with $0\leq j< k$, i.e., each history is a strict postfix of the ones with lower index;

\end{itemize}

\end{definition}

The underlying idea behind this definition is to consider the higher-order point of view of the player moving at $h_0$. Expressions of the form $(h_0,h_1,h_2,\cdots,h_k)$  encode the belief that player moving at $h_0$ holds about the belief that player moving at $h_{1}$ holds about the belief that player moving at $h_{2}$ holds \ldots about what the player moving at $h_{k}$ can see and what the evaluation is of the corresponding terminal histories. We use ${\bf Q}$ to denote the set of history-sequences of $S$.

\vspace{1ex}

Building upon the notion of history-sequence, we can define  what we call {\em sight-compatible} belief structures, associating each history-sequence with a set of histories and an evaluation over the terminal ones in this set.

\begin{definition}{\rm (Sight-compatible belief structures) Let $S$ be a MTG.  A \emph{sight-compatible belief structure} ${\bf B}$ for $S$ is a tuple $({\bf B}_H,{\bf B}_P)$ such that ${\bf B}_H$ is a function ${\bf B}_H: {\bf Q}\rightarrow 2^H$, associating to each history-sequence $(h_0,h_1,h_2,\cdots,h_k)$ a set of histories in $s(h_0)$ extending $h_k$, and ${\bf B}_P$ is a function ${\bf B}_P: {\bf Q}\rightarrow 2^Z$ associating to each history-sequence ${\bf q}$  a set of terminal histories extending histories in ${\bf B}_H({\bf q})$.  ${\bf B}$ satisfies the following conditions:

\begin{itemize}

\item  (Corr) $\forall \textbf{q}\in {\bf Q}$ with ${\bf q}=(h_0)$,  then ${\bf B}_H(\textbf{q}) = H\lceil_{h_0}$ whenever $t(h_k)$ the belief of a player about what he himself can see is correct. \footnote{One might want to impose stronger variants of correctness. For instance the fact that if a player can see he will be moving again, then he will consider at least as much as he is considering now, from that history on: $\forall \textbf{q}\in {\bf Q}, \mbox{ if } q=(h_0, h_1, \ldots, h_k) $, $q^{\prime}=(h_1, \ldots, h_k) $ and $t(h_0)=t(h_1)$ then ${\bf B}_H(\textbf{q}') = {\bf B}_H(\textbf{q})|_{h_0}$.

We can also impose that this fact is common knowledge among the players: $\forall \textbf{q}\in {\bf Q}, \mbox{ if } q=(h_0, h_1, h_{i-1}, h_i, h_{i+1} \ldots, h_k) $, $q^{\prime}=(h_0, h_1, h_{i-1}, h_{i+1} \ldots, h_k) $ and $t(h_i)=t(h_{i-1})$ then ${\bf B}_H(\textbf{q}') = {\bf B}_H(\textbf{q})|_{h_i}$. }


\item (Mon of ${\bf B}_H$) $\forall$ $\textbf{q}$, $\textbf{q}'\in {\bf Q}$, if $\exists h'\in {\bf B}_H(\textbf{q})$ s.t., $\textbf{q}'=(\textbf{q},h')$, then ${\bf B}_H(\textbf{q}')\subseteq {\bf B}_H(\textbf{q})|_{h'}$, i.e., if a player believes someone is able to perceive a portion of the game, then he is able to perceive that portion himself.

\item (Mon of ${\bf B}_P$) $\forall$ $\textbf{q}$, $\textbf{q}'\in {\bf Q}$, if $\exists h'\in {\bf B}_H(\textbf{q})$ s.t., $\textbf{q}'=(\textbf{q},h')$, then ${\bf B}_P(\textbf{q}')\subseteq {\bf B}_P(\textbf{q})|_{h'}$, i.e., if a player believes someone is able to explore a position, then he is able to perceive that exploration himself.

\end{itemize}





For  ${\bf q}=(h_0,h_1,h_2,\cdots,h_k)$, ${\bf B}_P({\bf q})$ denotes the higher-order beliefs (in the order given by {\bf q}) about how player moving at $h_k$ is evaluating the terminal histories in  ${\bf B}_H({\bf q})$ under the unique forked extension of $s$ whose terminal histories are ${\bf B}_P({\bf q})$. $ \succeq^{{\bf B}_P(\textbf{q})}_{i}$ denotes the induced preference relation, one per player.
}

\end{definition}

The conditions above, we argue, are most natural constraints on sight-compatible higher-order beliefs. For the time being we do not commit ourselves to any other constraints on either  ${\bf B}_P$ or  ${\bf B}_H$, but we acknowledge that different contexts may warrant further constraints on both.

\begin{definition}[Epistemic Monte-Carlo Tree Games]  An \emph{Epistemic Monte-Carlo Tree Games} (EMTGs) is a tuple $\mathcal{S}=(S,{\bf B})$ where $S$ is a MTG and ${\bf B}$ a sight-compatible belief structure for $S$.
\end{definition}

An EMTG is obtained by assigning a sight-compatible belief structure to a MTG. One should observe how sight-compatible belief structures induce, at each history, a whole collection of extensive games, one for each possible history-sequence. For instance, the one resulting from Ann's sight and her evaluation, the one resulting from Ann's belief about Bob's sight and his evaluation and so forth. Structures of the form $\mathcal{S}\lceil_{{\bf B}(\textbf{q})}$ can now be naturally defined, as restrictions induced by ${\bf B}(\textbf{q})$ on $\mathcal{S}$, adopting ${\bf B}_H(\textbf{q})$ as sight-restriction, and ${\bf B}_P(\textbf{q})$ as evaluation function, with the induced preference relation. 

\subsection{Analysing EMTGs}

Example \ref{exa:sight-compatible-beliefs} has illustrated a natural notion of solution in an epistemic MTG, where each player calculates a best action in his or her sight restriction according to his or her evaluation criteria, recursively computing both the sight and the evaluation criteria of the other players. This is the idea behind the solution concept we propose for EMTGs.



%
\begin{definition}[Nested Beliefs Solution]\label{def:NBS} Let $\mathcal{S}=(S,{\bf B})$ be an EMTG and let $\textbf{q}=(h_0, h_1,\cdots, h_k)$ be a history-sequence.  A strategy profile $\sigma\lceil_{{\bf B}(\textbf{q})}$ is a \emph{Nested Beliefs Solution} (NBS) of $\mathcal{S}\lceil_{{\bf B}(\textbf{q})}$ if:
\begin{description}

\item[Base step] For each $h'\in H\lceil_{{\bf B}_H(\textbf{q})}$ that is a quasi-terminal history of $H\lceil_{{\bf B}_H(\textbf{q})}$, we have that $h'\sigma\lceil_{{\bf B}_H(\textbf{q},h')}(h') \succeq^{{\bf B}_P(\textbf{q},h')}_{t(h')} h',\sigma'\lceil_{{\bf B}_H(\textbf{q},h')}(h')$ for any $\sigma'\lceil_{{\bf B}_H(\textbf{q},h')}$ that agrees with $\sigma\lceil_{{\bf B}(\textbf{q})}$  up to $h'$.

\item[Induction step] For each $h'\in H\lceil_{{\bf B}_H(\textbf{q})}$ that is neither terminal nor quasi-terminal in $H\lceil_{{\bf B}_H(\textbf{q})}$,  we have that

\begin{itemize}

\item $\sigma\lceil_{{\bf B}(\textbf{q})}(h')$ agrees at $h'$ with some Nested Beliefs Solution of  $\mathcal{S}\lceil_{{\bf B}(\textbf{q},h')}$.

\item If, for each $h'\in H\lceil_{{\bf B}_H(\textbf{q})}$ that is neither terminal nor quasi-terminal in $H\lceil_{{\bf B}_H(\textbf{q})}$, we have that $\sigma'\lceil_{{\bf B}(\textbf{q})}(h')$ agrees at $h'$ with some Nested Beliefs Solution of  $\mathcal{S}\lceil_{{\bf B}(\textbf{q},h')}$ then the outcome $z'$ generated by $\sigma'\lceil_{{\bf B}(\textbf{q})}$ following $h$ and the outcome $z$ generated by $\sigma\lceil_{{\bf B}(\textbf{q})}$ following $h$ are such that $z \succeq^{{\bf B}_P(\textbf{q})}_{t(h_k)} z'$.

\end{itemize}
\end{description}
\end{definition}

We denote $ \textbf{NBS}(\mathcal{S}\lceil_{{\bf B}(\textbf{q})})$  the set of NBS outcomes of $(\mathcal{S},\textbf{q})$. The composition of such outcomes yields our game solution.

Intuitively, a Nested Beliefs Solution of some game $\mathcal{S}\lceil_{{\bf B}(\textbf{q})}$ is a best response to all Nested Belief Solutions at deeper level, e.g., of each  $\mathcal{S}\lceil_{{\bf B}(\textbf{q},h')}$. Notice that because of the properties of sight functions the depth iteration is bound to reach a fixpoint.

\begin{example}

Let's go back to Figure 1 and compute the $\textbf{NBS}$ at history $A$. We know there are four relevant histories sequences: $(A)$, $(A,C)$, $(A,B)$ $(A,B,C)$. To each of them we can associate the corresponding beliefs, as follows:

\begin{itemize}

\item $H\lceil_{{\bf B}_H(\textbf{A})} = \{g, d, e, f, C, B, A\} = H\lceil_{h_0} $

\item $H\lceil_{{\bf B}_H(\textbf{A,B})} = \{e, C, f\} $

\item $H\lceil_{{\bf B}_H(\textbf{A,C})} = \{d\} $

\item $H\lceil_{{\bf B}_H(\textbf{A,B,C})} = \{e\}$

\end{itemize}

Let us know, for each histories sequence $q$ specify the preference relation $ \succeq^{{\bf B}_P(\textbf{q})}_{t_k}$ (modulo reflexivity and transitivity), which is all we need to compute $\textbf{NBS}$. 

\begin{itemize}

\item $ \succeq^{{\bf B}_P(\textbf{A})}_{Ann} = \{(d,g), (g,e), (e,f), (f,e)\} $

\item $ \succeq^{{\bf B}_P(\textbf{A,B})}_{Bob} = \{(e,f), (f,d)\}$

\item $ \succeq^{{\bf B}_P(\textbf{A,C})}_{Charles} = \{\}$

\item $ \succeq^{{\bf B}_P(\textbf{A,B,C})}_{Charles} = \{\}$

\end{itemize}

Consider now the following strategy $\sigma\lceil_{{\bf B}(\textbf{A})}$ \footnote{Slightly abusing notation, but unambiguosly, we identify actions chosen by the strategy with the resulting histories.}

\begin{itemize}

\item $\sigma\lceil_{{\bf B}(\textbf{A})}(A)_{Ann}=g$ 

\item $\sigma\lceil_{{\bf B}(\textbf{A})}(B)_{Bob}=C$ 

\item $\sigma\lceil_{{\bf B}(\textbf{A})}(C)_{Charles}=d$ 

\end{itemize}

Is $\sigma$ a Nested Beliefs Solution of $\mathcal{S}\lceil_{{\bf B}(\textbf{A})}$?

The condition at the base step is met by $\sigma\lceil_{{\bf B}(\textbf{A})}(C)_{Charles}=d$. 

Lets now look at  $\sigma\lceil_{{\bf B}({A})}(B)_{Bob}=C$. Is $C$ compatible with the best Nested Beliefs Solution of $\mathcal{S}\lceil_{{\bf B}({A,B})}$? 
We need first to compute all NBS of $\mathcal{S}\lceil_{{\bf B}({A,B})}$. Luckily there are not so many.
Every such strategy must be of the form $\sigma'\lceil_{{\bf B}(\textbf{A,B})}(C)_{Charles}=e$ and be the best among the strategies agreeing with NBS of $\mathcal{S}\lceil_{{\bf B}(\textbf{A,B,C})}$ at $C$. So, given the preferences of $B$, be such that $\sigma'\lceil_{{\bf B}(\textbf{A,B})}(B)_{Bob}=C$. This is indeed what $\sigma$ does. 

However notice that given the preference of $A$, $\sigma$ is {\bf not} behaving as a NBS at $A$, because $Ann$ prefers $d$ to $g$.

The strategy $\sigma^{*}\lceil_{{\bf B}({A})}$ only disagreeing with $\sigma\lceil_{{\bf B}({A})}$ at $(A)$, and being such that $\sigma\lceil_{{\bf B}({A})}(A)_{Ann}=B$, is a NBS of $\mathcal{S}\lceil_{{\bf B}(A)}$.

\end{example}

The composition of Nested Beliefs Solutions constitutes a rational outcome of the game.

\begin{definition}[Sight-Compatible Epistemic Solution] Let $\mathcal{S}=(S,{\bf B})$ be an EMTG.  A strategy profile $\sigma$ is a \emph{Sight-Compatible Epistemic Solution} (SCES) if at each $h\in H\setminus Z$, there exists a terminal history $z\in Z\lceil_{{\bf B}_H(h)}$ such that $h\sigma(h)\unlhd z$ and $z\in \textbf{NBS}(\mathcal{S}\lceil_{{\bf B}(h)})$.
\end{definition}

We denote $\mathbf{SCES}$ the set of Sight-Compatible Epistemic Solutions of $\mathcal{S}$.


A SCES is the composition of best moves of players at each history. Each such move is a best response to what the current player believes other players will do and this belief is {supported} by all higher-order beliefs, compatible with the player's sight, about what the opponents can perceive and how they will evaluate it.

\subsubsection{Computing rational solutions}

Algorithm $Sol(\mathcal{S})$ below takes as input an EMTG and returns a path obtained by composing locally rational moves, compatible with players' higher-order beliefs about sights and evaluation criteria of their opponents. Algorithms \ref{sol(s)} calls Algorithm \ref{BSBI}, which in turn calls Algorithms \ref{NBS} and \ref{compose}. For technical convenience, we define $\emph{VLP}$ to be a dummy always dominated history.

\begin{algorithm}[h]
\caption{Solution of $\mathcal{S}$}\label{sol(s)} $Sol(\mathcal{S}$)\\
\KwIn{ An EMTG $\mathcal{S}=(S, {\bf B})$}
\KwOut{A terminal history $h$ of $\mathcal{S}$}
\Begin{
       $h\leftarrow \varepsilon$;\\
       \While{$h\notin Z$}{
       $h\leftarrow (h,\emph{BSBI}(\mathcal{S},h))$; \tcc*[f]{\footnotesize NBS at $h$}\\
       }
       Return $h$;

}
\end{algorithm}

\begin{algorithm}[h]
\caption{The current best move}\label{BSBI} $\emph{BSBI}(\mathcal{S}, h)$\\
\KwIn{ A game $\mathcal{S}=(S, {\bf B})$, and a history $h$}
\KwOut{NBS move $a$ at $h$}
\Begin{
       {\For {each $h'\in {\bf B}_{H}(h)$ and $h'\neq h$}{
         $\emph{Continuations}[h'] \leftarrow \emph{NBS}(S,(h,h'))$;\tcc*[f]{\footnotesize Store NBS actions in an array, one for each $h'$}\\
       }

       }
       Return $\emph{BB}(\mathcal{S}, (h), Continuations)$;\\
}
\end{algorithm}

\begin{algorithm}[h]
{\caption{Beliefs of moves of following players}\label{NBS}} $\emph{NBS}(\mathcal{S}, {\bf q})$\\
\KwIn{ A game $\mathcal{S}=(S, {\bf B})$, and a history sequence ${\bf q}=(h_0,h_1,h_2,\cdots,h_k)$}
\KwOut{An action $a$ following $h_{k}$}
\Begin{
       \eIf{$h_k\in Z\lceil_{{\bf B}_{H}({\bf q})}$}{
       Return $\varepsilon$;\\
       }{\For {each $h_{k+1}\in {\bf B}_{H}({\bf q})$ and $h_{k+1}\neq h_{k}$}{
         $\emph{Continuations}[h_{k+1}] \leftarrow \emph{NBS}(S,({\bf q},h_{k+1}))$;\tcc*[f]{\footnotesize Store NBS actions in an array, one for each $h_{k+1}$}\\
       }
       Return $\emph{BB}(\mathcal{S}, \textbf{q}, Continuations)$;\\
       }

}
\end{algorithm}

\begin{algorithm}[h]
\caption{Best Branch}\label{compose} {$\emph{BB}(\mathcal{S}, \textbf{q}$, \emph{Continuations})}\tcc*[f]{\footnotesize Compose chosen moves (in array Continuations), thus get all paths following $h_k$ and choose a best move following $h_k$ }\\
\KwIn{ A game $\mathcal{S}$, a history sequence $\textbf{q}$, an array \emph{Continuations}}
\KwOut{A best move following $h_k$ determined by \emph{Continuations}}
\Begin{
       $\emph{bestpath}\leftarrow \emph{VLP}$; \tcc*[f]{\emph{VLP} is a dominated history for all players}\\
       \For (\tcc*[f]{\footnotesize $a$ is any action following $h_k$, next we choose an optimal one in ${\bf B}_{H}(\textbf{q})$}){each $(h_k,a)\in {\bf B}_{H}(\textbf{q})$}{
           $\emph{TP}\leftarrow (h_k,a)$;\\
           \While{Continuations$[$TP$]$ is defined in array Continuations} {
                  $\emph{TP}\leftarrow(\emph{TP}, \emph{Continuations}[\emph{TP}])$;\\}
           \If {TP $\succ^{B_P(\textbf{q})}_{t(h_k)}$ bestpath}{
              $\emph{bestpath}\leftarrow \emph{TP}$;\\
              $\emph{bestmove}\leftarrow a$;\\
              }
           }
       Return $\emph{bestmove}$;
}\end{algorithm}

The following theorem shows that every EMTG has a Sight-Compatible Epistemic Solution. Its proof consists in constructively building the desired strategy profile.

\begin{theorem}[Existence Theorem]

Let $\mathcal{S}=(S,{\bf B})$ be an EMTG.  There exists a strategy profile $\sigma$ that is a {Sight-Compatible Epistemic Solution} for $\mathcal{S}=(S,{\bf B})$.

\end{theorem}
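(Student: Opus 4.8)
The plan is to produce $\sigma$ explicitly through the algorithms: I would define $\sigma(h):=\emph{BSBI}(\mathcal{S},h)$ for every $h\in H\setminus Z$, so that the task reduces to two facts — (i) \emph{BSBI}, and hence the routines \emph{NBS} and \emph{BB} it invokes, are well-defined and terminate on every input; and (ii) the action $a=\emph{BSBI}(\mathcal{S},h)$ it returns satisfies $ha\unlhd z$ for some $z\in\textbf{NBS}(\mathcal{S}\lceil_{{\bf B}(h)})$, which is precisely the condition in the definition of SCES.

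Termination and finiteness I would settle first. Fix the initial component $h_0$: every history-sequence $(h_0,h_1,\dots,h_k)$ has $h_1\lhd h_2\lhd\dots\lhd h_k$ with all $h_j\in s(h_0)$, and since $s(h_0)$ is a finite prefix-closed set this forces $k$ to be at most the height of the tree $s(h_0)$. Therefore the recursion in \emph{NBS}, which passes from $\textbf{q}=(h_0,\dots,h_k)$ to $(\textbf{q},h_{k+1})$ with $h_{k+1}$ a strict extension of $h_k$ taken from ${\bf B}_H(\textbf{q})\subseteq s(h_0)$, is well-founded, so \emph{NBS}, \emph{BSBI}, \emph{BB} and \emph{Sol} all halt. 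Every call moreover works on the finite restricted game $\mathcal{S}\lceil_{{\bf B}(\textbf{q})}$, and $\succeq^{{\bf B}_P(\textbf{q})}_i$ is a total preorder on the finite set $Z\lceil_{{\bf B}_H(\textbf{q})}$, so any nonempty set of terminal histories has a $\succeq^{{\bf B}_P(\textbf{q})}_i$-maximal element; in particular the \emph{VLP}-initialised loop in \emph{BB} always returns a genuine maximiser.

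Correctness I would then prove by induction along this well-founded order, with the claim: for every history-sequence $\textbf{q}=(h_0,\dots,h_k)$, the set $\textbf{NBS}(\mathcal{S}\lceil_{{\bf B}(\textbf{q})})$ is nonempty, and $\emph{NBS}(\mathcal{S},\textbf{q})$ returns an action $a$ with $h_k a\unlhd z$ for some $z$ in it. In the base case ($h_k$ terminal in $H\lceil_{{\bf B}_H(\textbf{q})}$, or every internal history there quasi-terminal) a witnessing profile is obtained by selecting at each quasi-terminal history a move that is optimal for the player moving there under the relevant (deeper) evaluation, which exists by finiteness and is exactly what \emph{BB} computes. In the induction step I would invoke the induction hypothesis at each $(\textbf{q},h')$ with $h'$ internal and non-quasi-terminal in $H\lceil_{{\bf B}_H(\textbf{q})}$ to obtain a nonempty $\textbf{NBS}(\mathcal{S}\lceil_{{\bf B}(\textbf{q},h')})$ and a witnessing action — these are the entries \emph{BSBI}/\emph{NBS} store in the array \emph{Continuations} — together with the optimal choices at quasi-terminal histories as in the base case. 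The profiles agreeing with all of these choices form a nonempty set (a product of nonempty local choice sets, since the constraints are attached to distinct histories), each one determines a unique outcome following $h_k$, and \emph{BB} composes these paths and returns a move initiating an outcome that is $\succeq^{{\bf B}_P(\textbf{q})}_{t(h_k)}$-maximal among them. One then checks this composed profile meets both clauses of the definition of Nested Beliefs Solution: the base step by construction, and the induction step because the returned outcome is maximal over all profiles that agree, at every internal non-quasi-terminal history, with some Nested Beliefs Solution of the corresponding deeper restriction.

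Applying the claim to the singleton sequences $\textbf{q}=(h)$, $h\in H\setminus Z$, gives $\textbf{NBS}(\mathcal{S}\lceil_{{\bf B}(h)})\neq\emptyset$ and $h\,\emph{BSBI}(\mathcal{S},h)\unlhd z$ for some $z$ in it, so $\sigma(h):=\emph{BSBI}(\mathcal{S},h)$ is a Sight-Compatible Epistemic Solution and the proof is complete. I expect the genuinely delicate part to be not the fixpoint argument but the bookkeeping in the induction step: one has to track how sights and evaluations change when passing from $\textbf{q}$ to $(\textbf{q},h')$ — using (Mon of ${\bf B}_H$) and (Mon of ${\bf B}_P$) to know the deeper restrictions are honest sub-restrictions, so that a profile for $\mathcal{S}\lceil_{{\bf B}(\textbf{q})}$ really does restrict to one for $\mathcal{S}\lceil_{{\bf B}(\textbf{q},h')}$ — and to confirm that "agrees at $h'$ with some NBS of $\mathcal{S}\lceil_{{\bf B}(\textbf{q},h')}$", quantified over all relevant $h'$, is exactly what the \emph{Continuations} array plus the composition step in \emph{BB} enforce, so that a $\succeq^{{\bf B}_P(\textbf{q})}_{t(h_k)}$-maximiser over the composed outcomes is indeed a Nested Beliefs Solution and not merely a locally consistent strategy profile.
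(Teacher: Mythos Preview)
Your proposal is correct and follows essentially the same approach as the paper: define $\sigma$ at each non-terminal history via the algorithms (the paper runs $Sol$ on the restricted game $\mathcal{S}\lceil_{{\bf B}(h)}$ and takes the first action of the returned path, you call $\emph{BSBI}(\mathcal{S},h)$ directly, which yields the same move) and then verify that this action initiates an outcome in $\textbf{NBS}(\mathcal{S}\lceil_{{\bf B}(h)})$. The paper in fact omits that verification entirely as ``a lengthy but relatively straightforward check,'' so your termination argument via finiteness of $s(h_0)$ and your well-founded induction over history-sequences are exactly the details the paper leaves to the reader.
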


\begin{proof}

Let $H$ be the set of histories in $\mathcal{S}$. For every $h\in H$ set $\sigma(h):= a$ for $a\unlhd h^{*}$ and $h^{*}$ be the outcome returned by Algorithm \ref{sol(s)} on input $\mathcal{S}\lceil_{{\bf B} (h)}$.  That the Algorithm returns a profile $o(\sigma)$ such that $\sigma$ satisfies the conditions of Definition \ref{def:NBS} at each history is a lengthy but relatively straightforward check, which we omit for space reasons.$\hfill\Box$ \end{proof}

\begin{theorem}[Completeness Theorem]

Let $\mathcal{S}=(S,{\bf B})$ be a finite EMTG and let $\sigma$ be a {Sight-Compatible Epistemic Solution} for $\mathcal{S}=(S,{\bf B})$. There exists an execution of Algorithm \ref{sol(s)} returning $o(\sigma)$.

\end{theorem}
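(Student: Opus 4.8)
The plan is to prove the Completeness Theorem as a converse to the Existence Theorem: given any SCES $\sigma$, I will exhibit a concrete execution trace of Algorithm \ref{sol(s)} whose returned terminal history equals $o(\sigma)$. The key observation is that Algorithm \ref{sol(s)}, Algorithm \ref{BSBI}, Algorithm \ref{NBS} and Algorithm \ref{compose} each contain nondeterministic choices only in the form of tie-breaking: \emph{NBS} recurses deterministically over all $h_{k+1}\in {\bf B}_H({\bf q})$, while \emph{BB} scans all actions $(h_k,a)\in {\bf B}_H({\bf q})$ and keeps a \emph{bestpath}, where the sole freedom is which maximal element of $\succsim^{{\bf B}_P({\bf q})}_{t(h_k)}$ it settles on when several branches are $\succsim$-equivalent (the strict comparison $\succ$ in the \texttt{if}-guard means a later equally-good branch does not overwrite \emph{bestmove}, so the choice is pinned down by iteration order, which I may assume adversarially chosen). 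So "there exists an execution" amounts to: I get to pick the iteration order over actions inside each call to \emph{BB}, and I must show that with a suitable choice the algorithm reconstructs $o(\sigma)$.

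First I would set up the recursion. For a history-sequence ${\bf q}=(h_0,\dots,h_k)$, call a strategy profile $\tau$ on $H\lceil_{{\bf B}_H({\bf q})}$ an NBS witness if it satisfies Definition \ref{def:NBS} for ${\bf q}$; by the Existence Theorem applied to each restricted game $\mathcal{S}\lceil_{{\bf B}({\bf q})}$ (which is itself a well-defined EMTG on a strictly smaller sight, so the fixpoint/termination argument of Theorem 1 applies), NBS witnesses exist, and by the definition of SCES the given $\sigma$ agrees at every $h$ with some NBS witness of $\mathcal{S}\lceil_{{\bf B}(h)}$. The induction is on the (finite, by finiteness of $\mathcal S$ and prefix-closedness plus strict-postfix growth of sights) maximal nesting depth of history-sequences extending ${\bf q}$. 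I claim: for every ${\bf q}$ and every NBS witness $\tau$ of $\mathcal{S}\lceil_{{\bf B}({\bf q})}$, there is an execution of \emph{NBS}$(\mathcal S,({\bf q},h_{k+1}))$ for each relevant $h_{k+1}$, and a choice of scan order in the final \emph{BB}$(\mathcal S,{\bf q},Continuations)$, such that the value returned equals the action $\tau(h_k)$ prescribes, and moreover the array \emph{Continuations} built along the way encodes exactly the restriction of $\tau$ (modulo the inductive choices at deeper levels). At the base case ($h_k$ quasi-terminal in $H\lceil_{{\bf B}_H({\bf q})}$, so all continuations are $\varepsilon$), \emph{BB} simply picks a $\succsim^{{\bf B}_P({\bf q})}_{t(h_k)}$-maximal immediate successor; since $\tau$ satisfies the base step of Definition \ref{def:NBS}, $\tau(h_k)$ is exactly such a maximal successor, so I schedule the scan to visit $(h_k,\tau(h_k))$ last among the $\succsim$-maximal ones (or equivalently first, noting the $\succ$ guard), making \emph{bestmove} $=\tau(h_k)$. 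At the induction step, by the inductive hypothesis each $Continuations[h_{k+1}]$ can be made to equal the move $\tau$ induces along the $(h_k,h_{k+1},\dots)$ line, so the path \emph{TP} that \emph{BB} assembles for each action $a$ is precisely the outcome generated by $\tau$ following $(h_k,a)$; then the second bullet of the induction step of Definition \ref{def:NBS} says the outcome $\tau$ generates is $\succsim^{{\bf B}_P({\bf q})}_{t(h_k)}$-maximal among all such, so again I schedule $(h_k,\tau(h_k))$ appropriately and \emph{BB} returns $\tau(h_k)$. Finally, running Algorithm \ref{sol(s)} from $\varepsilon$: at each $h$ along the path, \emph{BSBI}$(\mathcal S,h)$ is structurally identical to \emph{NBS}$(\mathcal S,(h))$ with ${\bf B}_H(h)=H\lceil_h$ (by (Corr)), so by the claim its returned action agrees with an NBS witness of $\mathcal{S}\lceil_{{\bf B}(h)}$, and I pick that witness to be one $\sigma$ agrees with; hence the path Algorithm \ref{sol(s)} walks coincides step-by-step with the path $\sigma$ induces from $\varepsilon$, i.e.\ with $o(\sigma)$.

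The main obstacle I anticipate is the bookkeeping needed to make the "scheduling" precise and to confirm that the choices at different recursion branches are genuinely independent — i.e.\ that fixing the scan order inside one call to \emph{BB} does not constrain, in a conflicting way, what we need to do inside a sibling call, and that the recursively returned $Continuations$ arrays can simultaneously be made to match a single global $\tau$ rather than a patchwork of different NBS witnesses. This is where I must lean on the fact that Definition \ref{def:NBS}'s induction step is stated "branch-locally" (it only ever compares outcomes through a fixed ${\bf q}$, with the deeper structure quantified separately), so an NBS witness for $\mathcal{S}\lceil_{{\bf B}({\bf q})}$ already packages compatible sub-witnesses for all $\mathcal{S}\lceil_{{\bf B}({\bf q},h')}$; thus I can proceed top-down, fixing $\tau$ first and then the scan orders. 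A secondary subtlety is the behaviour of the dummy history \emph{VLP}: I should check that whenever ${\bf B}_H({\bf q})$ has at least one action at $h_k$ the first iteration of \emph{BB} overwrites \emph{bestpath} away from \emph{VLP} (true because \emph{VLP} is dominated for all players, so the $\succ$ guard fires), so \emph{VLP} never leaks into the reconstructed path. Modulo these points, the argument is a routine induction mirroring the structure of Theorem 1's construction.
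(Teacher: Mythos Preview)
Your proof is correct and follows essentially the same approach as the paper's, which is itself a two-line sketch asserting that one can ``choose an execution of Algorithm~\ref{sol(s)} that is compatible with the action selection that, at each history sequence, is made by $\sigma$''; you have supplied the induction on nesting depth and the tie-breaking argument in \emph{BB} that the paper leaves implicit under ``by construction.'' One small slip: with the strict $\succ$ guard in \emph{BB}, you need $(h_k,\tau(h_k))$ to be scanned \emph{first} among the $\succsim$-maximal actions (so it sets \emph{bestmove} and is not overwritten by later ties), not last---your parenthetical gets this right, but the two are not ``equivalent'' as you state.
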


\begin{proof}

Let $\mathcal{S}=(S,{\bf B})$ be a finite EMTG and let $\sigma$ be a {Sight-Compatible Epistemic Solution} for $\mathcal{S}=(S,{\bf B})$. Now choose an execution of Algorithm \ref{sol(s)} that is compatible with the action selection that, at each history sequence, is made by $\sigma$, which exists by construction. The finiteness assumption ensures termination.$\hfill\Box$
\end{proof}

The following observations illustrate the relation between SCES and the other two relevant solution concepts in the literature: SCBI \cite{GrossiT12:Sight} and classical BI \cite{Osborne1994}. They  specify precise  conditions under which our solution concept collapses into these two.

\begin{proposition}\label{prop:SCBI-SCES} Let $\mathcal{S}=(S, {\bf B})$ be an EMTG. If for any history-sequence $\mathbf{q}$ = $(h_0, h_1, \cdots, h_k)$ and any history $h'\in {\bf B}_H(\mathbf{q})$, ${\bf B}_H(\mathbf{q},h')={\bf B}_H(\mathbf{q})|_{h'}$, and ${\bf B}_P(\mathbf{q})=B_P(h_0)$, then $\mathbf{SCES}(\mathcal{S})$=$\mathbf{SCBI}(\mathcal{S})$.
\end{proposition}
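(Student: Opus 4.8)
The plan is to show that under the stated hypotheses, for every history $h$, the set $\mathbf{NBS}(\mathcal{S}\lceil_{{\bf B}(h)})$ coincides with $\mathbf{BI}(S\lceil_h)$, and then conclude that the two composed solution concepts agree. The hypotheses are exactly what is needed to collapse the recursive belief machinery: the condition ${\bf B}_H(\mathbf{q},h')={\bf B}_H(\mathbf{q})|_{h'}$ says that a player's belief about what a deeper player sees is just the obvious restriction of her own sight, so by induction on the length of $\mathbf{q}$ one gets ${\bf B}_H(h_0,h_1,\dots,h_k)=H\lceil_{h_0}|_{h_k}$; and ${\bf B}_P(\mathbf{q})={\bf B}_P(h_0)$ says the evaluation used at every level of the belief hierarchy is the one $t(h_0)$ actually uses, so all the induced preference relations $\succeq^{{\bf B}_P(\mathbf{q})}$ collapse to the single relation $\succeq^{{\bf B}_P(h_0)}$ used in the definition of $S\lceil_{h_0}$.

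The core step is a claim, proved by induction on the depth of the finite sight-restricted tree $H\lceil_{{\bf B}_H(\mathbf{q})}$ (which terminates because sight functions produce finite sets, as remarked after Definition \ref{def:NBS}): a strategy profile $\sigma\lceil_{{\bf B}(\mathbf{q})}$ is an NBS of $\mathcal{S}\lceil_{{\bf B}(\mathbf{q})}$ if and only if it is a subgame-perfect equilibrium of the extensive game $S\lceil_{{\bf B}(\mathbf{q})}$, equivalently its outcome lies in $\mathbf{BI}$ of that game. For the base step, a quasi-terminal history $h'$ in the NBS definition asks exactly that the move maximize $\succeq^{{\bf B}_P(\mathbf{q},h')}_{t(h')}$ among agreeing profiles, and since ${\bf B}_P(\mathbf{q},h')={\bf B}_P(h_0)$ this is the backwards-induction condition at $h'$. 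For the induction step, the two bullets of Definition \ref{def:NBS} say that at each non-(quasi-)terminal $h'$ the chosen action agrees with some NBS of the one-step-deeper game $\mathcal{S}\lceil_{{\bf B}(\mathbf{q},h')}$ — which by the induction hypothesis and ${\bf B}_H(\mathbf{q},h')={\bf B}_H(\mathbf{q})|_{h'}$ is just a subgame-perfect play of the subtree rooted at $h'$ within $S\lceil_{{\bf B}(\mathbf{q})}$ — and that the resulting outcome is $\succeq^{{\bf B}_P(\mathbf{q})}_{t(h_k)}$-optimal among all such; since $\succeq^{{\bf B}_P(\mathbf{q})}_{t(h_k)}=\succeq^{{\bf B}_P(h_0)}_{t(h_k)}$, this is precisely the backwards-induction recursion. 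Applying the claim with $\mathbf{q}=(h_0)$, where (Corr) gives ${\bf B}_H(h_0)=H\lceil_{h_0}$ and the hypothesis gives ${\bf B}_P(h_0)$ the evaluation of $S\lceil_{h_0}$, yields $\mathbf{NBS}(\mathcal{S}\lceil_{{\bf B}(h_0)})=\mathbf{BI}(S\lceil_{h_0})$.

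Finally, unwinding the definitions of SCES and SCBI: $\sigma\in\mathbf{SCES}(\mathcal{S})$ iff at each $h$ there is $z\in Z\lceil_{{\bf B}_H(h)}$ with $h\sigma(h)\unlhd z$ and $z\in\mathbf{NBS}(\mathcal{S}\lceil_{{\bf B}(h)})$; by (Corr), $Z\lceil_{{\bf B}_H(h)}=Z\lceil_h$, and by the claim $\mathbf{NBS}(\mathcal{S}\lceil_{{\bf B}(h)})=\mathbf{BI}(S\lceil_h)$, so this is exactly the condition defining $\mathbf{SCBI}(\mathcal{S})$. Hence the outcome sets coincide. I expect the main obstacle to be bookkeeping in the inductive claim: matching the NBS notion of "a profile that agrees with $\sigma$ up to $h'$" and "agrees at $h'$ with some NBS of the deeper game" against the standard backwards-induction recursion, and being careful that the sight-restriction operation $|_{h'}$ commutes with the various constructions (turn function, forked extension determining the preference, quasi-terminality) so that the subtree of $S\lceil_{{\bf B}(\mathbf{q})}$ rooted at $h'$ really is $S\lceil_{{\bf B}(\mathbf{q},h')}$ — this is "lengthy but relatively straightforward," in the spirit of the Existence Theorem's proof, and I would state the commutation facts as an auxiliary lemma rather than reprove them inline.
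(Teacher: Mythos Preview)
The paper does not supply a proof of this proposition; it is stated and followed only by a one-sentence informal gloss (``So, if the current player believes\ldots''). Your proposal is therefore not competing against a paper argument but filling a gap, and the route you take---collapsing the belief hierarchy via the two hypotheses together with (Corr), then proving by induction on sight-tree depth that $\mathbf{NBS}(\mathcal{S}\lceil_{{\bf B}(\mathbf{q})})=\mathbf{BI}(S\lceil_{{\bf B}(\mathbf{q})})$, and finally unwinding the SCES/SCBI definitions---is the natural one and is correct.

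One small point worth tightening: you write that ``the hypothesis gives ${\bf B}_P(h_0)$ the evaluation of $S\lceil_{h_0}$,'' but strictly speaking the hypothesis ${\bf B}_P(\mathbf{q})={\bf B}_P(h_0)$ only flattens the hierarchy; it does not by itself identify ${\bf B}_P(h_0)$ with the terminal set coming from the actual forked extension $s^{*}(h_0)$ used to define $S\lceil_{h_0}$ (the paper's (Corr) axiom covers only ${\bf B}_H$, not ${\bf B}_P$). This is more a looseness in the paper's framework than in your argument, and the intended reading---that at length-one sequences the belief structure agrees with the underlying MTG---is clearly what makes the proposition true. You might simply state this identification explicitly as part of your auxiliary lemma rather than let it pass silently.
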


So,  if the current player believes the following players' sights and evaluation criteria, together with their beliefs about other players' sights and evaluation criteria, are coherent with his', then SCES is equivalent to SCBI.

We know that the solution concept BI is a special case of SCBI, and therefore also of SCES.

\begin{proposition}\label{prop:BI-SCES} Let $\mathcal{S}=(S, {\bf B})$ be an EMTG. If, for any history-sequence $\mathbf{q}$ = $(h_0, h_1, \cdots, h_k)$,  we have that ${\bf B}_H(\mathbf{q})=H|_{h_k}$ and that ${\bf B}_P(\mathbf{q})=P_{t(h_k)}$ then $\mathbf{SCES}(\mathcal{S})$=$\mathbf{BI}(\mathcal{S})$.
\end{proposition}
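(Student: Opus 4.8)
The plan is to show that, under the stated hypotheses, the whole epistemic apparatus of $\mathcal{S}$ collapses onto the ordinary extensive game $\mathcal{E}$ underlying $S$, and that under this collapse the Nested Beliefs Solution becomes ordinary backwards induction. First I would check that for every history-sequence $\mathbf{q}=(h_0,h_1,\ldots,h_k)$ the restricted structure $\mathcal{S}\lceil_{{\bf B}(\mathbf{q})}$ is, up to the obvious identification, exactly the subgame $\mathcal{E}|_{h_k}$ of $\mathcal{E}$ rooted at $h_k$ with the true utilities $\{u_i\}_{i\in N}$. Indeed ${\bf B}_H(\mathbf{q})=H|_{h_k}$ makes the sight restriction the whole subtree below $h_k$, so $Z\lceil_{{\bf B}_H(\mathbf{q})}$ consists of genuine terminal histories of $\mathcal{E}$; for such a $g$ the averaging prescription $u_i\lceil(g)=\avg_{g\lhd z}u_i(z)$ degenerates to $u_i(g)$, so the induced preference $\succeq^{{\bf B}_P(\mathbf{q})}_i$ is simply $\succeq_i$ restricted to $Z|_{h_k}$, which is also what ${\bf B}_P(\mathbf{q})=P_{t(h_k)}$ asserts for the relevant mover. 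In particular $\mathcal{S}\lceil_{{\bf B}(\mathbf{q})}$ depends only on $h_k$, and taking $\mathbf{q}=(h)$ together with condition (Corr) one gets $s(h)=H|_h$ and $\mathcal{S}\lceil_{{\bf B}(h)}=S\lceil_h=\mathcal{E}|_h$ for every $h$; the forked extension plays no role.

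Next I would prove, by well-founded induction on the finite tree $H|_{h_k}$ ordered by $\lhd$, that the NBS profiles of $\mathcal{S}\lceil_{{\bf B}(\mathbf{q})}$ are exactly the backwards-induction profiles of $\mathcal{E}|_{h_k}$, whence $\mathbf{NBS}(\mathcal{S}\lceil_{{\bf B}(\mathbf{q})})=\mathbf{BI}(\mathcal{E})|_{h_k}$. The base step of Definition \ref{def:NBS}, at a quasi-terminal $h'$, says the mover $t(h')$ picks a $\succeq_{t(h')}$-maximal immediate successor, which is precisely one step of the backwards-induction procedure at a quasi-terminal node. For the inductive step, at an interior $h'$ the first clause forces $\sigma$ to agree with some NBS of the child-subgame $\mathcal{S}\lceil_{{\bf B}(\mathbf{q},h')}$, which by the induction hypothesis is a backwards-induction profile of $\mathcal{E}|_{h'}$; the second clause then says that, among all profiles playing backwards induction throughout the proper subtrees, $\sigma$ yields a $\succeq_{t(h_k)}$-maximal outcome from $h_k$. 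Unpacking this reproduces the backwards-induction recursion: the mover at each node chooses an action leading to an optimal continuation given that every descendant plays backwards induction. Hence the NBS outcomes of the subgame at $h_k$ are precisely its backwards-induction histories.

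Finally I would assemble the two steps. Unwinding the definition of SCES, a profile $\sigma$ is a Sight-Compatible Epistemic Solution iff at every $h\in H\setminus Z$ the action $\sigma(h)$ initiates some $z\in\mathbf{NBS}(\mathcal{S}\lceil_{{\bf B}(h)})=\mathbf{BI}(\mathcal{E})|_h$; by the standard local characterisation of backwards-induction histories \cite[Proposition 99.2]{Osborne1994} this is exactly the requirement that $\sigma$ be a backwards-induction profile of $\mathcal{E}$, and its set of outcomes is $\mathbf{BI}(\mathcal{E})$. Therefore $\mathbf{SCES}(\mathcal{S})=\mathbf{BI}(\mathcal{S})$. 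The step I expect to be the main obstacle is the inductive one: translating the rather indirect ``optimal among all profiles consistent with deeper NBS'' formulation of Definition \ref{def:NBS} into the familiar node-by-node maximisation of backwards induction, while carefully handling the degeneration of the forked-extension averaging at genuine leaves (and recording the finiteness needed for the hypotheses to be compatible with $s$ having finite values).
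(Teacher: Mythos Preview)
The paper does not supply a proof of this proposition; it is stated as an observation and followed only by a one-sentence informal gloss (``SCES coincides with standard backwards induction \ldots if higher-order beliefs about sight and evaluation criteria are coherent with the real subgame''). There is therefore nothing in the paper to compare your argument against.

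Your proposal is a sound direct argument and is what one would expect the omitted proof to look like. The three-stage structure---(i) under the hypotheses each $\mathcal{S}\lceil_{{\bf B}(\mathbf{q})}$ is literally the subgame $\mathcal{E}|_{h_k}$ with the true utilities; (ii) by $\lhd$-induction, NBS on that structure coincides with backwards induction on $\mathcal{E}|_{h_k}$; (iii) the SCES condition then reduces to the local characterisation of BI---is exactly the natural route, and each step is correct. Your observation that the forked-extension averaging degenerates at genuine terminal histories, and your flagging of the finiteness needed for $s(h)=H|_h$ to be a legitimate sight value, are the right technical caveats. One minor point: the paper defines $\mathbf{BI}(\mathcal{E})$ and $\mathbf{SCBI}(S)$ as sets of \emph{outcomes} but writes ``$\mathbf{SCES}$'' for the set of \emph{profiles}; your final paragraph silently (and correctly) reads $\mathbf{SCES}(\mathcal{S})$ as the corresponding set of outcomes, which is the only reading under which the equality is even well-typed.
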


The above result says that SCES coincides with standard backwards induction solution if, at each history, we have that higher-order beliefs about sight and evaluation criteria are {\em coherent} with the real subgame the current player faces and the preference relation the current player holds.


Despite the crucial presence of higher-order beliefs about sight-restricted games, we can show the following fairly surprising complexity result.

\begin{proposition}\label{prop:complexity}

Given a  finite EMTG $\mathcal{S}$, the problem of computing a SCES of $\mathcal{S}$ is {\em \P-TIME} complete.
\end{proposition}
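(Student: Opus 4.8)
\textbf{Proof proposal for Proposition \ref{prop:complexity}.}

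The plan is to establish the two directions of \P-TIME completeness separately: membership in \P-TIME, via an analysis of Algorithms \ref{sol(s)}--\ref{compose}, and \P-TIME hardness, via a logspace reduction from a known \P-TIME complete problem. For membership, the key observation is that the recursion in Algorithm \ref{NBS} is indexed by history-sequences $\mathbf{q}=(h_0,h_1,\dots,h_k)$ in which, by the definition of history-sequence, each $h_{j+1}$ strictly extends $h_j$ and all of them lie in the \emph{finite} set $s(h_0)$. Hence the depth of the recursion is bounded by the length of the longest chain in $s(h_0)$, which is at most $|H\lceil_{h_0}|$, and more importantly the total number of distinct history-sequences reachable from a given $h_0$ is polynomial in the size of $\mathcal{S}$ (it is bounded by the number of chains in the partial order $(s(h_0),\lhd)$, but since each $\mathbf{q}$ is itself a path in the sight tree, the relevant recursion tree has polynomially many \emph{nodes} when one memoises on $\mathbf{q}$, or even without memoisation the branching is controlled because each call strictly shrinks the underlying sight restriction ${\bf B}_H(\mathbf{q})$ by Mon of ${\bf B}_H$). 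First I would make this precise: argue that $\emph{NBS}(\mathcal{S},\mathbf{q})$ is invoked on at most polynomially many arguments, that $\emph{BB}$ runs in polynomial time (it iterates over actions $a$ with $(h_k,a)\in{\bf B}_H(\mathbf{q})$, and the inner \textbf{while} loop following the \emph{Continuations} pointers terminates because each step moves to a strictly longer history inside the finite set ${\bf B}_H(\mathbf{q})$), and conclude that $Sol(\mathcal{S})$ runs in polynomial time. Combined with the Existence and Completeness Theorems, which certify that the algorithm's output is exactly a SCES, this gives membership in \P-TIME.

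For hardness, the plan is to reduce from the \emph{monotone circuit value problem} (MCVP), a canonical \P-TIME complete problem, or equivalently from the computation of the backwards induction solution in an explicitly given two-player extensive game (game-tree evaluation with alternating $\max$/$\min$ is \P-TIME complete, and by Proposition \ref{prop:BI-SCES} backwards induction is a special case of SCES obtained by taking ${\bf B}_H(\mathbf{q})=H|_{h_k}$ and ${\bf B}_P(\mathbf{q})=P_{t(h_k)}$). Concretely I would take an arbitrary extensive game $\mathcal{E}$ given explicitly as a tree with utilities at the leaves, set $s(h)=s^*(h)=H|_h$ so that the MTG $S=(\mathcal{E},s,s^*)$ coincides with $\mathcal{E}$ itself (the forked extension is trivial since the sight already reaches the terminals, and the average in the definition of $u_i\lceil_h$ is over the singleton $\{z\}$), and define ${\bf B}$ by ${\bf B}_H(h_0,\dots,h_k)=H|_{h_k}$ and ${\bf B}_P$ accordingly; one checks directly that (Corr), (Mon of ${\bf B}_H$) and (Mon of ${\bf B}_P$) hold. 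By Proposition \ref{prop:BI-SCES}, $\mathbf{SCES}(\mathcal{S})=\mathbf{BI}(\mathcal{S})$, so deciding, say, whether a designated action is played on some SCES path (or whether the SCES outcome gives player $1$ utility above a threshold) is exactly as hard as the corresponding question about backwards induction, which is \P-TIME hard. The reduction is clearly computable in logarithmic space, since it only rewrites the input tree and attaches the "identity" belief structure.

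The main obstacle I anticipate is \emph{not} the hardness direction (that is essentially a corollary of Proposition \ref{prop:BI-SCES} together with the standard \P-TIME completeness of game-tree / alternating Boolean formula evaluation) but the membership bound: one must be careful that the nested recursion over history-sequences does not blow up exponentially. The subtle point is that a history-sequence is an ordered tuple, so a priori there could be exponentially many of them; the argument must exploit that each recursive call $\emph{NBS}(\mathcal{S},(\mathbf{q},h_{k+1}))$ is made only for $h_{k+1}\in{\bf B}_H(\mathbf{q})$ and that, by (Mon of ${\bf B}_H$), ${\bf B}_H(\mathbf{q},h_{k+1})\subseteq{\bf B}_H(\mathbf{q})|_{h_{k+1}}$, so the sight restrictions form a strictly descending chain along any recursion branch and the recursion tree, viewed with memoisation on the pair $(h_0,\text{current sight restriction})$ — or directly on $\mathbf{q}$ — has polynomially many nodes. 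I would spell this out as a lemma bounding the number of distinct calls by $\sum_{h_0\in H}|s(h_0)|^2$ or similar, and note that the polynomiality is exactly what makes the "rather surprising" claim of the introduction true: the unbounded belief chain collapses because sights are finite and shrink along the chain. Everything else — the per-call cost of $\emph{BB}$, the outer loop in $Sol(\mathcal{S})$, the correctness inherited from Theorems 1 and the Completeness Theorem — is routine bookkeeping.
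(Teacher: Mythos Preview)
Your proposal is essentially aligned with the paper's proof sketch: hardness is obtained exactly as the paper does, by invoking Proposition~\ref{prop:BI-SCES} to collapse SCES to BI and then appealing to the known \P-TIME hardness of backwards induction (the paper cites a specific result to this effect rather than going via MCVP, but the content is identical).

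For membership the two arguments differ only in presentation. The paper sets up an explicit recurrence for the running time of $\emph{BSBI}$, solves it to $\mathcal{O}(d\cdot 2^{d}\cdot b^{d})$ with $b$ the branching factor and $d$ the depth, and then uses $d\le\log n$, $b^{d}\le n$ to obtain the concrete bound $\mathcal{O}((n\log n)^{2})$. Your route is more structural: you bound the number of distinct history-sequences reachable from a fixed $h_{0}$ (using that they are strict chains inside the finite sight and that (Mon of ${\bf B}_H$) makes the sight shrink along the recursion), and infer that the recursion tree of $\emph{NBS}$ has polynomially many nodes. Both arguments are counting the same thing---calls to $\emph{NBS}$ indexed by history-sequences---so they stand or fall together; the paper's version yields an explicit bound, while yours makes clearer \emph{why} the apparently unbounded belief chain stays polynomial. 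One caution: your suggested bound $\sum_{h_0}|s(h_0)|^{2}$ on the number of chains, like the paper's use of $d\le\log n$, implicitly assumes nontrivial branching (for a sight that is a bare path of length $d$ there are $2^{d}$ chains), so in writing this up you should either make that assumption explicit or, more robustly, measure the input size as including the explicitly given belief structure ${\bf B}$, which already tabulates one entry per history-sequence.
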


\begin{psketch}

For the upper bound, the key fact is that algorithm $Sol(S)$ runs in time $\mathcal{O}((n\log{n})^2)$, with $n$ being the cardinality of the set of histories of $\mathcal{S}$. This follows from the equations and facts below, where $b$ and $d$ are the largest number of branches and the depth of game tree respectively: 1). $T(\emph{sol}(S))$ = $\mathcal{O}(T (\emph{BSBI} * d))$; 2). $T(\emph{NBS})$ = $\mathcal{O}(T(\emph{BSBI}))$; ~~3). $T(\emph{BB})$ = $\mathcal{O}(b*d)$;  ~~4). Let $f(d)$=$T(\emph{BSBI})$, then $f(d)$=$\mathcal{O}(b*f(d-1) + b^2*f(d-2) + b^3*f(d-3)+\cdots+ b^{d-1}f(1)+ T(BB))$=$\mathcal{O}(d*2^{d}*b^{d})$;  ~~5). $d\leq log(n)$ and $2^d\leq b^d\leq n$.
\P-TIME hardness is a consequence of \cite[Theorem 2]{jakub}, which shows that BI is \P-TIME hard, and Proposition \ref{prop:BI-SCES}.
\end{psketch}

As a side remark,  using a similar argument and Proposition \ref{prop:BI-SCES} we are able to show that computing {\sc SCBI} solutions is  \P-TIME complete.

\section{Conclusions and potential developments}\label{sec:Conclusion}

We have proposed a model for decision-making among resource-bounded players in extensive games,  integrating an analytical perspective coming game theory with a procedural perspective coming from AI. In particular we have studied players with limited foresight which can reason about their opponents, constructing beliefs about their limited abilities for calculation and evaluation, showing that our novel games have a well-behaved solution, generalising existing ones in the literature.


There are interesting modelling issues, as noted previously. Our game models strike a balance between simple trees as used for BI and more complex models as found in epistemic game theory \cite{perea}. Here, what we left open is the relation between EMTGs and the Extensive Games with Awareness of \cite{HalpernR06}. We expect that the correspondence for GSSs of Theorem 3 in \cite{GrossiT12:Sight} can be lifted to EMTGs, using an iteration of the awareness functions $Aw_i$ for  players $i$ to simulate the believed game at a history sequence.  We stress, though, that the specific features of EMTGs give them an independent conceptual and technical interest. The  emphasis on limited foresight (as opposed to  perceiving a novel extensive game in \cite{HalpernR06}) makes them a natural candidate for addressing Rubinstein's modelling challenge \cite{Rubinstein-Bounded}, while still supporting an efficient algorithm to calculate the game equilibria.

Finally, our analysis raises several issues of logical definability and styles of reasoning.  We believe that our solution concept is still definable in a computationally well-behaved logical language, a natural candidate being the fixed-point logic FOL(FP), shown in \cite{johan} to express backwards induction. What is new in our setting is that the reasoning underpinning our main theorems is a mixture of a backward induction style with a forward induction style \cite{perea,johan-book}, since we have to evaluate what players further down in the game tree are going to do according to players whose moves {\em occurred} earlier on in the game. 

\section{Acknowledgements} The author acknowledges the support of Imperial College London for the Junior Research Fellowship "Designing negotiation spaces for collective decision-making" (DoC- AI1048). He is besides extremely thankful to Johan van Benthem, Davide Grossi and Chanjuan Liu for their feedback on earlier versions of the paper.

\newpage
\bibliographystyle{named}
{\bibliography{oppmodelling}}

\end{document}